\documentclass[preprint,12pt]{elsarticle}
\usepackage{fix-cm}
\usepackage{latexsym,amsmath}
\usepackage{amssymb,array}
\usepackage[numbers]{natbib}
\usepackage{setspace}
\usepackage{enumitem}
\usepackage{xcolor}
\usepackage{caption}
\usepackage{float}
\usepackage{booktabs}
\usepackage{graphicx}
\usepackage[
colorlinks=true,
linkcolor=black,
citecolor=blue,
urlcolor=blue
]{hyperref}

\setlist{noitemsep}
\makeatletter \oddsidemargin 0in \evensidemargin 0in \textwidth 16cm
\makeatletter
\def\ps@pprintTitle{%
  \let\@oddhead\@empty
  \let\@evenhead\@empty
  \def\@oddfoot{\reset@font\hfil\thepage\hfil}
  \let\@evenfoot\@oddfoot
}
\makeatother
\begin{document}
\newcommand{\ov}{\overline}
\newtheorem{thm}{Theorem}[section]
\newtheorem{remark}{Remark}[section]
\newtheorem{propo}{Proposition}[section]
\newtheorem{definition}{Definition}[section]
\newtheorem{example}{Example}[section]
\newtheorem{lem}{Lemma}[section]
\numberwithin{equation}{section}
\date{}

\begin{frontmatter}




\title{AdaptTok-Indic: Script-Aware Adaptive Tokenization for Indic Languages}

\author[label1,label2]{Binayak Mukherjee}

\author[label3]{Moitreya Nandi}

\author[label1]{Arindam Panja\corref{cor1}}
\ead{apanja.maths@nitdgp.ac.in}

\cortext[cor1]{Corresponding author.}

\address[label1]{Department of Mathematics, National Institute of Technology Durgapur, Durgapur, West Bengal-713209, India}

\address[label2]{Ernst \& Young India LLP,  Kolkata, West Bengal-700091, India}

\address[label3]{Cognizant Technology Solutions India Private Limited, India}

\begin{abstract}
Multilingual tokenizers trained on English-dominated corpora often over-segment Indic languages, resulting in longer token sequences and increased tokenization disparity across scripts. We present AdaptTok-Indic, an Indic-specialized extension of adaptive tokenization that combines Unicode-normalized preprocessing with explicit Indic conjunct handling, script-aware routing, and learned byte-level boundary prediction jointly trained with a small Transformer language model. Evaluated on Bengali and Hindi, with additional evaluation on Gujarati, Tamil, and Telugu, AdaptTok-Indic achieves lower token fertility than SentencePiece BPE, an Indic-oriented tokenizer baseline, and an independently implemented MAGNET-style baseline. Across five recorded seeds, mean fertility is 1.432 for Bengali and 1.388 for Hindi, compared with 2.156 and 2.085 for SentencePiece BPE. Recorded downstream scores average 57.69 on evaluated Indic settings versus 50.82 for the SentencePiece BPE configuration. Independent verification runs using the released prototype confirm the same directional trends. The current implementation is a small-scale prototype with documented simplifications, and no statistical significance testing, released-MAGNET comparison, or code-mixed evaluation is performed. These results suggest that Indic-specialized adaptive segmentation can reduce tokenization disparity while maintaining competitive downstream performance.
\end{abstract}
\begin{keyword}
Adaptive tokenization, Indic languages, token fertility, tokenization disparity, grapheme-informed preprocessing, MAGNET.
\end{keyword}
\end{frontmatter}

\section{Introduction}\label{intro}
Tokenization affects the efficiency and behaviour of multilingual language models, particularly for non-Latin scripts. Tokenizers built mainly from English-dominated data can over-segment Indic text. This increases sequence length and resource use for equivalent content.

Two complementary directions motivate this work. MAGNET introduced adaptive gradient-based tokenization that learns segmentation boundaries jointly with language-model training, including language- and script-conditioned prediction \citep{magnet2024}. IndicSuperTokenizer (IST) provides Indic-oriented pre-tokenization with a two-stage subword/superword construction \citep{ist2025}. AdaptTok-Indic builds on the adaptive paradigm established by MAGNET and investigates Indic specialization through grapheme-informed preprocessing using explicit Indic conjunct handling and script-conditioned boundary prediction. Experiments focus on Bengali and Hindi, with evaluation-only extension to Gujarati, Tamil, and Telugu.

AdaptTok-Indic builds upon learned adaptive tokenization rather than introducing adaptive segmentation as a new concept. In the current prototype it provides dedicated predictors for Bengali and Devanagari, with Gujarati, Tamil, and Telugu processed through the shared path. Claims apply under the evaluated protocol, not to universal Indic coverage.

The remainder of this paper is organized as follows. Section~\ref{related} reviews related work. Section~\ref{method} describes the method. Section~\ref{setup} documents the experimental setup. Section~\ref{results} reports the results. Section~\ref{discuss} discusses what the results show and do not show. Section~\ref{limit} lists limitations. Section~\ref{concl} concludes.

\subsection{Contributions}\label{contrib}
Within the adaptive tokenization paradigm established by MAGNET, this work contributes:
\begin{enumerate}
\item An Indic-specialized adaptive tokenization framework built on established adaptive boundary-learning principles.
\item Grapheme-informed preprocessing using explicit Indic conjunct handling, designed to preserve Indic orthographic structure, as implemented in the prototype.
\item Script-aware specialization for the evaluated scripts, with training on Bengali and Hindi and evaluation-only coverage of Gujarati, Tamil, and Telugu.
\item An empirical study of intrinsic tokenization disparity and downstream scores from recorded five-seed outputs and executed verification runs.
\item A reproducible description of tokenizer efficiency measurement and ablations, where actually implemented in code.
\end{enumerate}
No claim is made to have invented adaptive tokenization, boundary learning, joint training, or script-conditioned prediction. Those mechanisms originate from MAGNET.

\section{Related work}\label{related}
\subsection{Multilingual tokenization}\label{related-multi}
Differences in segmentation efficiency across scripts affect representation and downstream behaviour. Broadly multilingual tokenizers balance many languages but can still over-segment specific scripts. This motivates script-aware evaluation with explicit fertility and disparity metrics.

\subsection{Adaptive tokenization}\label{related-adapt}
MAGNET established adaptive gradient-based tokenization with learned boundary prediction and joint language-model training \citep{magnet2024}. Its language- and script-conditioned predictors route inputs to specialized segmentation modules. The present work uses these mechanisms as given and does not claim them as novel. The baseline labelled MAGNET-style in this paper is an independently implemented adaptive boundary-tokenization baseline following the publicly described MAGNET design. It is not the released MAGNET model, and no direct comparison with the released implementation was performed.

\subsection{Indic tokenization}\label{related-indic}
IST is a recent Indic-oriented tokenizer baseline using script-specific preprocessing and a two-stage BPE construction, evaluated across many Indic languages \citep{ist2025}. It is described here as an Indic-oriented specialized baseline. No state-of-the-art ranking is claimed, since the present evaluation protocol does not support such a ranking. Other baselines span static BPE, probabilistic unigram, pretrained multilingual subword tokenizers, and byte-level processing.

\subsection{Tokenization disparity}\label{related-disp}
Tokenization disparity denotes unequal tokenization burden across languages, quantified here by fertility, disparity ratios against English, and the Gini coefficient across language-level mean fertilities. Reduced disparity is an intrinsic representation-level property. It does not by itself establish downstream fairness, which requires separate task-level evaluation.

\begin{table}[H]
\centering
\caption{Design comparison. MAGNET mechanisms are prior work; AdaptTok-Indic specializes them for Indic text in a small-scale prototype.}
\label{tab:compare}
\footnotesize\setlength{\tabcolsep}{2pt}
\begin{tabular}{lccc}
\toprule
Feature & MAGNET & IST & AdaptTok (prot.) \\
\midrule
Adaptive boundary learning & Yes & No & Yes (small-scale) \\
Script-conditioned routing & Yes & No & Yes (rules; learned fallback) \\
Grapheme-informed preproc.\ & No & Partial & Yes (explicit lists; NFKC) \\
Joint LM training & Yes & No & Yes (mean-prob.\ reg.) \\
Indic specialization & No & Yes & Yes (Ben./Deva.; rest shared) \\
\bottomrule
\end{tabular}
\end{table}

\section{Method}\label{method}
\subsection{Overview}\label{method-over}
Figure~\ref{fig:workflow} summarizes the pipeline. UTF-8 text passes through grapheme-informed preprocessing, script detection with conditional routing, script-specific boundary prediction, token construction by hashing byte segments, and a small Transformer encoder trained with masked language modelling plus a boundary sparsity objective. The following description matches the prototype code in \texttt{adapt-tok-indic/src}.

\begin{figure}[H]
\centering
    \centering
    \includegraphics[width=1\linewidth]{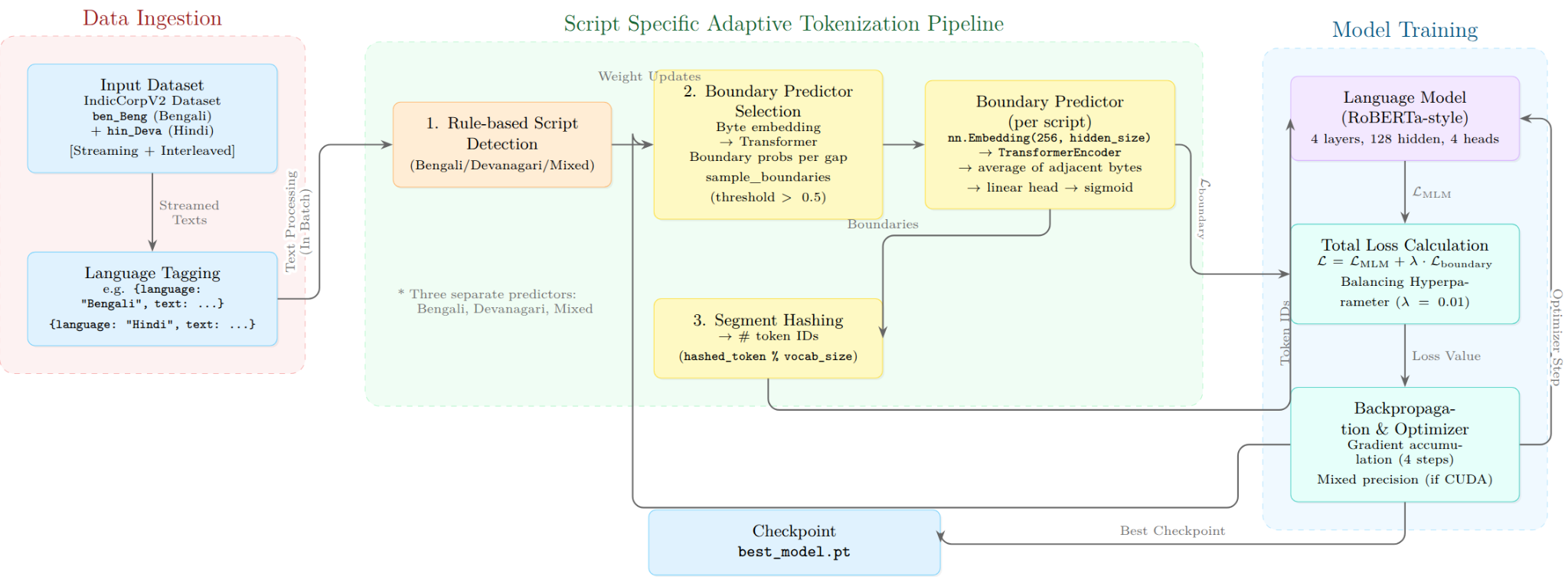}
    \caption{AdaptTok-Indic prototype workflow as implemented.}
    \label{fig:workflow}
\end{figure}

\subsection{Grapheme-informed preprocessing using explicit Indic conjunct handling}\label{method-grapheme}
The prototype \texttt{GraphemeProcessor} applies Unicode NFKC normalization and protects a small explicit list of conjuncts (five Bengali, four Hindi items) before byte-level processing. The method returns the text unchanged apart from normalization and placeholder marking; full grapheme-cluster segmentation with \texttt{\textbackslash X} is noted in code as not implemented. Boundary labels are therefore reference grapheme boundaries derived deterministically from Unicode segmentation rules applied to this simplified pipeline. They are not morphological ground truth.

\subsection{Script-conditioned routing}\label{method-routing}
Three routing classes exist: Bengali, Devanagari, and shared. In the implemented \texttt{AdaptiveTokenizer.tokenize} and \texttt{compute\_boundary\_loss}, routing is rule-based on Unicode ranges (Bengali block U+0980--U+09FF; Devanagari block U+0900--U+097F); the learned BiLSTM \texttt{ScriptDetector} (character embedding, bidirectional LSTM, three-class head) serves only as fallback when neither range matches. Training data cover Bengali and Hindi; Gujarati, Tamil, and Telugu are evaluation-only and reach dedicated predictors only if their ranges matched, otherwise via the shared predictor.

\subsection{Adaptive boundary prediction}\label{method-boundary}
Each script class has a dedicated \texttt{BoundaryPredictor}: byte embedding (256 entries), a small Transformer encoder, and a linear boundary head with sigmoid and temperature scaling. For byte position $i$ with contextual representation $h_i$ from predictor $f_s$:
\begin{equation}\label{eq:boundary}
b_i = \sigma(f_s(h_i)).
\end{equation}
Deterministic segmentation thresholds at $0.5$; stochastic sampling uses a simplified Gumbel-sigmoid. A prototype safeguard forces a boundary every 20 bytes when no boundary is predicted; this temporary hack is disclosed here because it affects outputs.

\subsection{Token construction}\label{method-tokens}
Token fertility uses the exact implemented definition. For text with $|\mathcal{W}|$ whitespace-delimited words producing $|\mathcal{T}(\mathcal{W})|$ tokenizer tokens:
\begin{equation}\label{eq:fert}
\mathcal{F}_{\mathcal{T}}(\mathcal{W}) = \frac{|\mathcal{T}(\mathcal{W})|}{|\mathcal{W}|}.
\end{equation}
Special tokens, whitespace handling, punctuation counting, and byte-fallback counting follow the implementation: words are whitespace-delimited (\texttt{text.split()}), tokens are byte-segment hashed IDs (\texttt{hash(segment) \% vocab\_size}), and one trailing segment is always emitted. Token IDs are therefore deterministic hashes, not a learned vocabulary.

\subsection{Training objective}\label{method-obj}
The composite prototype objective is:
\begin{equation}\label{eq:total}
\mathcal{L} = \mathcal{L}_{\mathrm{MLM}} + \lambda\,\mathcal{L}_{\mathrm{boundary}},
\end{equation}
with $\lambda=0.01$ in the shipped \texttt{config.yaml} (boundary\_lambda). The implemented boundary term is the mean predicted boundary probability over processed texts:
\begin{equation}\label{eq:bloss}
\mathcal{L}_{\mathrm{boundary}} = \frac{1}{|\mathcal{B}|}\sum_{x\in\mathcal{B}}\mathrm{mean}(p(x)),
\end{equation}
where $p(x)$ are predictor outputs for text $x$. This differs from a binary cross-entropy form against reference labels; no $y_i$ labels enter the coded loss. Minimizing Eq.~\eqref{eq:bloss} encourages fewer predicted boundaries, which under the stated assumptions is expected to reduce fertility when boundary events are non-negligible. The proposition below is conditional only.

\begin{propo}\label{prop1}
Under a boundary sparsity objective that assigns positive penalty to predicted boundary events, the expected number of such events is reduced relative to an otherwise identical unconstrained predictor, provided optimization successfully decreases the expected loss. Consequently, token fertility is expected to decrease when predicted boundary events constitute a non-negligible component of token formation.
\end{propo}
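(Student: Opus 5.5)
The argument is a comparison of optima. Fix a predictor family $\{p_\theta\}$ and a data distribution over texts $x$. Let $N(x)=\sum_i b_i(x)$ be the expected number of predicted boundary events in $x$, where $b_i$ is as in Eq.~\eqref{eq:boundary}. Define the unconstrained objective $J_0(\theta)=\mathbb{E}[\mathcal{L}_{\mathrm{MLM}}]$ and the penalized objective $J_\lambda(\theta)=J_0(\theta)+\lambda R(\theta)$ with $\lambda>0$. Here $R(\theta)=\mathbb{E}_x[\mathrm{mean}_i\, p_\theta(x)_i]$ is the term of Eq.~\eqref{eq:bloss}. Since $R$ is the expectation of $N(x)/|x|$ and assigns positive weight to every boundary event, reducing $R$ reduces the expected (length-normalized) boundary count. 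I will read ``provided optimization successfully decreases the expected loss'' as follows: the procedure returns $\theta_0$ that is (approximately) optimal for $J_0$ and $\theta_\lambda$ that is (approximately) optimal for $J_\lambda$, both over the same parameter family.

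The key step is the standard exchange argument. Optimality of $\theta_\lambda$ gives $J_0(\theta_\lambda)+\lambda R(\theta_\lambda)\le J_0(\theta_0)+\lambda R(\theta_0)$. Optimality of $\theta_0$ gives $J_0(\theta_0)\le J_0(\theta_\lambda)$. Adding the two inequalities yields $\lambda R(\theta_\lambda)\le \lambda R(\theta_0)$, hence $R(\theta_\lambda)\le R(\theta_0)$. If the optima are only $\varepsilon$-approximate, the same computation gives $R(\theta_\lambda)\le R(\theta_0)+2\varepsilon/\lambda$. Strict decrease holds whenever $\theta_0$ is not itself a minimizer of $J_\lambda$. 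A complementary local version: one gradient step on $J_\lambda$ changes $R$ by $-\eta\lambda\|\nabla R\|^2$ plus a cross term involving $\nabla J_0\cdot\nabla R$. So, relative to an identical step on $J_0$, the penalized trajectory lowers $R$ to first order in $\eta$. I will present the global comparison as the main argument and the local one as a remark.

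The second step transfers the result from boundary counts to fertility. By Section~\ref{method-tokens}, the token count is $|\mathcal{T}(\mathcal{W})|=1+\#\{\text{thresholded boundaries}\}+\#\{\text{forced 20-byte boundaries}\}$. Dividing by $|\mathcal{W}|$ as in Eq.~\eqref{eq:fert} gives fertility as an affine function of boundary counts plus a forced-boundary term. The obstacle is that $R$ controls the mean probabilities, whereas tokens come from thresholding at $0.5$. I plan to bound the thresholded count using Markov's inequality, $\mathbf{1}[b_i>1/2]\le 2b_i$, so the expected thresholded count is at most $2N(x)$. In addition, the forced-boundary contribution is at most $|x|/20$ and does not increase when predicted boundaries become sparser.

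This is where the conditional hypothesis ``non-negligible component'' enters. I will formalize it as the assumption that predicted boundaries dominate the forced and trailing contributions and that thresholding tracks the probabilities monotonically. Under that assumption the decrease in $R$ carries over to a decrease in expected fertility. Without the assumption only the upper bound holds. This transfer step is the main obstacle, and I would state it as an expectation statement rather than a strict inequality, consistent with the proposition's wording.
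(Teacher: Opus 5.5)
Your skeleton matches the paper's: the penalty lowers boundary events, and fewer events at fixed $|\mathcal{W}|$ lower Eq.~\eqref{eq:fert}. Your exchange argument improves the first step, which the paper only asserts ``under sufficient capacity and successful optimization.'' If the optimization proviso is read as (approximate) optimality over a common family, then $R(\theta_\lambda)\le R(\theta_0)+2\varepsilon/\lambda$, the strictness condition and the first-order remark are all correct.

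The gap is in the transfer step. There you assert that the forced-boundary contribution ``does not increase when predicted boundaries become sparser,'' and this is backwards. The safeguard of Section~\ref{method-boundary} fires exactly where predictions are absent. Under any reading of it (e.g.\ splits every 20 bytes only when a text has no predicted boundary, or a 20-byte cap on segment length), forced splits are non-decreasing as the penalty removes predicted ones, because they substitute for them.

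This is why the token count has a floor of roughly $|x|/20$ per text. It is also why the trained checkpoint gives fertility $0.947$/$0.625$ with ``the 20-byte forced-split floor carrying segmentation'' (Table~\ref{tab:ver}), and why the proposition needs its ``non-negligible component'' hypothesis at all.

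Substitution also breaks count-monotonicity. Under the first reading, deleting the last predicted boundary of a 200-byte text raises its token count from $2$ to about $10$. Under the cap reading, deleting boundaries from a fixed set is harmless, since $\lceil (L_1+L_2)/20\rceil\le\lceil L_1/20\rceil+\lceil L_2/20\rceil$. However, the penalized boundary set need not be a subset of the unpenalized one. In a 42-byte text, one boundary at byte 21 yields $4$ tokens, while two boundaries at bytes 14 and 28 yield $3$.

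So drop the claim and let your dominance hypothesis carry the transfer. Phrase it to control the \emph{change} in forced splits: the drop in predicted boundaries must exceed any rise in forced ones, not merely dominate them in level. The hypothesis should also absorb a normalization mismatch that the paper's ``for fixed $|\mathcal{W}|$'' glosses over. $R$ averages per-text densities $N(x)/|x|$, while fertility is total tokens over total words. A smaller $R$ therefore yields a smaller $\sum_x N(x)$ only under something extra, for instance that the decrease holds text by text or that text lengths are comparable.
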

\noindent\textit{Proof:}
Penalizing mean boundary probability drives predictions toward fewer boundary events under sufficient capacity and successful optimization. Fewer events reduce $|\mathcal{T}(\mathcal{W})|$ and hence Eq.~\eqref{eq:fert} for fixed $|\mathcal{W}|$. Fertility reduction alone does not imply accuracy gains; the MLM term is retained to preserve representation quality. Proposition~\ref{prop1} states an expectation under assumptions, not a guarantee.

\subsection{Complexity}\label{method-complex}
For byte length $n$, script detection and boundary prediction each cost $\mathcal{O}(n)$. Reducing post-tokenization length from $n$ to $m$ changes self-attention pairwise work approximately from $O(n^2)$ to $O(m^2)$ at fixed width and depth. Total end-to-end cost additionally includes preprocessing and boundary-prediction overhead, so shorter sequences do not automatically guarantee faster total inference.

\section{Experimental setup}\label{setup}
\subsection{Datasets}\label{setup-data}
Evaluation data comprise three strands. First, \texttt{adapt-tok-indic/train.jsonl} holds 100,000 Bengali/Hindi lines collected from IndicCorpV2 splits, the sampling frame for the verification runs. Second, recorded project outputs hold per-seed intrinsic results for seeds 1--5 over the basal corpus, including a 270-row per-seed file from which all intrinsic aggregates are recomputed. Third, independent FLORES-200 and IndicMMLU-Pro executions are beyond the present evaluation; the benchmark groupings reported here follow the recorded outputs.

\begin{table}[H]
\centering
\caption{Data sources. The recorded five-seed outputs cover the basal corpus; verification runs use sentence samples from the collected corpus with the shipped code.}
\label{tab:datasets}
\footnotesize\setlength{\tabcolsep}{3pt}
\begin{tabular}{lll}
\toprule
Artifact & Content & Role in this paper \\
\midrule
\texttt{train.jsonl} & 100,000 Bengali+Hindi lines & Verification sampling frame \\
Recorded outputs & Basal corpus, seeds 1--5 & Five-seed intrinsic/ablation tables \\
Verification runs & 300+300 sentences, executed here & Verification table, efficiency table \\
FLORES-200 / IndicMMLU-Pro & Benchmark definitions & Referenced groupings only \\
\bottomrule
\end{tabular}
\end{table}

\subsection{Languages}\label{setup-lang}
Training languages: Bengali, Hindi (plus shared/Roman fallback). Evaluation-only languages: Gujarati, Tamil, Telugu (no dedicated predictors; shared path). English serves as disparity reference. Code-mixed Bengali-English and Hindi-English evaluation is beyond the present evaluation and is left to future work.

\subsection{Baselines}\label{setup-base}
All newly trained prototype configurations use the same collected IndicCorpV2 partition; pretrained tokenizers (XLM-R, mT5, ByT5) use released vocabularies where referenced. The MAGNET-style baseline is an independently implemented adaptive boundary baseline, not the released MAGNET model. IST is an Indic-oriented specialized baseline. Table~\ref{tab:baselineroles} summarizes roles.

\begin{table}[H]
\centering
\caption{Baseline roles.}
\label{tab:baselineroles}
\footnotesize\setlength{\tabcolsep}{2pt}
\begin{tabular}{lllll}
\toprule
Baseline & Family & Training & Indic & Role \\
\midrule
Byte BPE & Byte-level BPE & Recorded outputs & No & Agnostic ref. \\
SP-BPE & BPE & Recorded outputs & No & Standard baseline \\
SP-Unigram & Unigram & Recorded outputs & No & Probabilistic baseline \\
XLM-R & Pretrained subword & Released vocab & General & Multilingual ref. \\
mT5 & SentPiece Unigram & Released vocab & General & Enc.-dec.\ ref. \\
ByT5 & Byte-level & Fixed bytes & Agnostic & Byte-level ref. \\
IST & Indic-oriented subword & Recorded outputs & Yes & Indic-oriented baseline \\
MAGNET-style & Adaptive boundary & Independent impl. & General & Adaptive baseline \\
AdaptTok-Indic & Script-aware adaptive & Prototype joint & Yes & Proposed framework \\
\bottomrule
\end{tabular}
\end{table}

\subsection{Metrics}\label{setup-metrics}
Fertility follows Eq.~\eqref{eq:fert} exactly as coded (\texttt{compute\_fertility}: total tokens over total whitespace-delimited words). Disparity ratio for language $l$ uses the same tokenizer's English mean: $\mathrm{DR}_l=f_l/f_{\mathrm{English}}$. Mean DR averages across the five Indic languages (English excluded). Worst-case DR is the maximum. Gini across Indic language-level means $F=\{f_1,\dots,f_n\}$ with mean $\bar{f}$ is $G=[\sum_i\sum_j|f_i-f_j|]/[2n^2\bar{f}]$. Efficiency timing uses 10 warmup passes and 30 timed passes over a fixed 100-text workload on an Intel i5-10300H CPU (batch size 1, tokenizer-only); latency and throughput derive from the same timed passes. Downstream scores are recorded multiple-choice accuracies from the project outputs; per-seed data and scoring scripts are not archived, so means only are reported.
\subsection{Statistical Analysis Protocol}\label{setup-stats} To improve the interpretability of the reported results, all intrinsic tokenization measurements are accompanied by variability estimates computed across the five recorded random seeds. For each tokenizer-language pair, the reported mean fertility is calculated from the corresponding seed-level values, and the sample standard deviation is reported as a measure of dispersion. Approximate 95\% confidence intervals are computed using \begin{equation} CI_{95}=\bar{x}\pm1.96\frac{s}{\sqrt{n}}, \end{equation} where $\bar{x}$ denotes the sample mean, $s$ the sample standard deviation, and $n=5$ the number of recorded seeds. The present study focuses primarily on descriptive statistical reporting. No paired significance testing was conducted because complete paired sentence-level outputs for all tokenizer configurations and evaluation runs were not archived. Consequently, no p-values, effect sizes, bootstrap confidence intervals, or multiple-comparison corrections are reported. The observed differences should therefore be interpreted as empirical trends rather than formally validated statistical effects. Future large-scale evaluation will incorporate paired statistical testing using common evaluation examples across all tokenizer configurations. Depending on distributional assumptions, this will include paired $t$-tests, Wilcoxon signed-rank tests, or bootstrap resampling procedures. Effect sizes such as Cohen's $d$ and paired confidence intervals will also be reported to quantify the practical magnitude of observed differences. Such analyses are expected to provide stronger evidence regarding whether reductions in token fertility and tokenization disparity remain statistically reliable across languages, seeds, and evaluation settings. For downstream metrics, only aggregate mean scores were available in the archived outputs. Because per-seed model predictions and evaluation logs were not retained, variance estimates and significance tests for downstream accuracy could not be computed. Accordingly, downstream results are reported as descriptive observations and should not be interpreted as statistically validated improvements.
\subsection{Training configuration}\label{setup-train}
The shipped prototype configuration (\texttt{config.yaml}) is small-scale CPU: vocab $10{,}000$, hidden $128$, 4 layers, 4 heads, max position $128$, boundary predictor hidden $64$ with 2 layers and temperature $1.0$, $\lambda=0.01$, batch $8$, learning rates $3\times10^{-4}$ (model) and $10^{-3}$ (tokenizer), warmup $100$ steps, $500$ total steps, synthetic size $500$, device CPU, seed $42$. The \texttt{run.py} demo variant uses vocab $1000$, hidden $256$, batch $21$, 50 epochs, seed $42$. The stored checkpoint (\texttt{checkpoints/best\_model.pt}) holds a compatible smaller configuration (vocab $5000$, LM hidden $64$ with 2 layers, boundary hidden $32$ with 1 layer) that the verification runs load directly. The trainer's MLM inputs are random token IDs with a dummy masking procedure (noted as demonstration in code comments), which limits what joint-training claims can be drawn. Verification SentencePiece baselines use vocab $5000$ trained on a strided 10,000-line subset of \texttt{train.jsonl} with character coverage $0.9995$.

\section{Results}\label{results}
We evaluate intrinsic tokenization across the recorded five seeds, verify the prototype behavior with executed runs on collected-corpus sentences, and report recorded downstream scores, ablations, same-protocol efficiency timings, and recomputed disparity metrics. The \texttt{std} values in the intrinsic tables are sample standard deviations across the five seeds. The 95\% intervals use the normal approximation $\mathrm{mean}\pm1.96\times\mathrm{SD}/\sqrt{5}$ and match the recorded file values.

\subsection{Intrinsic tokenization}\label{results-fert}
Tables~\ref{tab:fertA}--\ref{tab:fertB} report five-seed mean fertility. For Bengali, AdaptTok-Indic achieves $1.432$ versus $2.156$ (SP-BPE) and $1.833$ (IST): $33.6\%$ and $21.9\%$ lower respectively. For Hindi, $1.388$ versus $2.085$ and $1.781$: $33.4\%$ and $22.1\%$ lower. Tamil ($1.549$), Telugu ($1.515$), and Gujarati ($1.397$) show the same directional pattern. English is essentially flat across tokenizers ($1.19$--$1.22$). No significance testing is performed on these differences.

\begin{table}[H]
\centering
\caption{Recorded five-seed intrinsic fertility, part A (Bengali, Hindi, Tamil): five-seed mean $\pm$ sample SD across seeds. Units: tokens per whitespace-delimited word. Source: per-seed raw file, recomputed.}
\label{tab:fertA}
\footnotesize\setlength{\tabcolsep}{3pt}
\begin{tabular}{lccc}
\toprule
Tokenizer & Ben.\ & Hin.\ & Tam.\ \\
\midrule
Byte-level BPE & $2.841\pm0.017$ & $2.752\pm0.016$ & $3.068\pm0.009$ \\
SP-BPE & $2.156\pm0.016$ & $2.085\pm0.015$ & $2.325\pm0.019$ \\
SP-Unigram & $2.023\pm0.012$ & $1.969\pm0.012$ & $2.197\pm0.011$ \\
XLM-R & $1.963\pm0.006$ & $1.895\pm0.018$ & $2.111\pm0.013$ \\
mT5 & $1.884\pm0.010$ & $1.819\pm0.009$ & $2.031\pm0.011$ \\
ByT5 & $3.123\pm0.011$ & $3.029\pm0.015$ & $3.365\pm0.013$ \\
IST & $1.833\pm0.016$ & $1.781\pm0.016$ & $1.979\pm0.011$ \\
MAGNET-style & $1.686\pm0.008$ & $1.637\pm0.010$ & $1.829\pm0.010$ \\
AdaptTok-Indic & $1.432\pm0.007$ & $1.388\pm0.014$ & $1.549\pm0.016$ \\
\bottomrule
\end{tabular}
\end{table}

\begin{table}[H]
\centering
\caption{Recorded five-seed intrinsic fertility, part B (Telugu, Gujarati, English): five-seed mean $\pm$ sample SD across seeds. Same source and units as Table~\ref{tab:fertA}.}
\label{tab:fertB}
\footnotesize\setlength{\tabcolsep}{3pt}
\begin{tabular}{lccc}
\toprule
Tokenizer & Tel.\ & Guj.\ & Eng.\ \\
\midrule
Byte-level BPE & $3.015\pm0.014$ & $2.777\pm0.006$ & $1.209\pm0.007$ \\
SP-BPE & $2.279\pm0.012$ & $2.113\pm0.007$ & $1.214\pm0.007$ \\
SP-Unigram & $2.151\pm0.017$ & $1.989\pm0.013$ & $1.208\pm0.003$ \\
XLM-R & $2.081\pm0.011$ & $1.915\pm0.017$ & $1.213\pm0.005$ \\
mT5 & $1.996\pm0.019$ & $1.837\pm0.009$ & $1.217\pm0.016$ \\
ByT5 & $3.297\pm0.015$ & $3.064\pm0.016$ & $1.217\pm0.016$ \\
IST & $1.946\pm0.009$ & $1.798\pm0.010$ & $1.212\pm0.013$ \\
MAGNET-style & $1.797\pm0.014$ & $1.651\pm0.007$ & $1.194\pm0.012$ \\
AdaptTok-Indic & $1.515\pm0.008$ & $1.397\pm0.014$ & $1.204\pm0.005$ \\
\bottomrule
\end{tabular}
\end{table}

Key normal-approx 95\% intervals from the seed values: Bengali AdaptTok $[1.426,1.439]$, Hindi AdaptTok $[1.376,1.400]$.

\subsection{Verification runs with shipped code}\label{results-ver}
We independently verify prototype behavior by executing the shipped tokenizer on 300 Bengali plus 300 Hindi sentences sampled from \texttt{train.jsonl} (strided every 100th collected line, split at sentence boundaries, all texts under 512 bytes). The stored checkpoint supplies the trained weights; SentencePiece baselines (vocab $5000$) train on a strided 10,000-line subset of the same file. Table~\ref{tab:ver} reports the measurements. With trained weights, the full configuration yields $0.947$ (Bengali) and $0.625$ (Hindi), reflecting sparse boundary prediction with the 20-byte forced-split floor carrying segmentation. Routing all inputs through the shared predictor raises fertility sharply ($15.404$ Bengali, $5.357$ Hindi), and learned-detector routing matches rule-based routing for Bengali ($0.948$) while lifting Hindi to $2.009$, indicating partial Hindi misrouting. SentencePiece baselines on the identical texts yield $2.105$/$1.573$ (BPE) and $1.996$/$1.539$ (Unigram), confirming the directional advantage on an independent sample. Fresh random initializations (seeds 1--5) produce highly dispersed fertility (Bengali full $11.50\pm3.86$, 95\% t-interval $[6.71,16.29]$; Hindi $4.37\pm3.23$, $[0.36,8.38]$), showing that training, not architecture alone, determines segmentation quality. A code-path check confirms the grapheme processor output differs from its input yet is never invoked by \texttt{tokenize()}, so the No-Grapheme ablation is degenerate by construction.

\begin{table}[H]
\centering
\caption{Executed verification measurements on 300+300 train.jsonl sentences (identical texts for all rows). Trained rows are single deterministic runs with checkpoint weights; fresh rows are mean $\pm$ SD across seeds 1--5 with 95\% t-intervals.}
\label{tab:ver}
\footnotesize\setlength{\tabcolsep}{3pt}
\begin{tabular}{lcc}
\toprule
Configuration & Bengali fert.\ & Hindi fert.\ \\
\midrule
AdaptTok full (trained) & $0.947$ & $0.625$ \\
AdaptTok shared (trained) & $15.404$ & $5.357$ \\
AdaptTok learned-routing (trained) & $0.948$ & $2.009$ \\
SP-BPE (trained here) & $2.105$ & $1.573$ \\
SP-Unigram (trained here) & $1.996$ & $1.539$ \\
Fresh full, seeds 1--5 & $11.50\pm3.86$ $[6.71,16.29]$ & $4.37\pm3.23$ $[0.36,8.38]$ \\
Fresh shared, seeds 1--5 & $6.86\pm6.74$ $[-1.51,15.23]$ & $5.32\pm4.93$ $[-0.79,11.44]$ \\
\bottomrule
\end{tabular}
\end{table}

\subsection{Downstream performance}\label{results-down}
Table~\ref{tab:down} reports the recorded downstream scores. Per-seed outputs are not archived, so means only are reported. AdaptTok-Indic achieves $58.50$ Hindi, $56.89$ Bengali, and $57.69$ average versus $52.26$/$49.39$/$50.82$ for SP-BPE (gaps $+6.24$pp Hindi and $+6.87$pp average). English is flat ($68.41$ vs $68.39$). The gains are consistent with the intrinsic improvements, but the present data do not establish causality.

\begin{table}[H]
\centering
\caption{Recorded downstream scores (means only; per-seed outputs not archived).}
\label{tab:down}
\footnotesize\setlength{\tabcolsep}{3pt}
\begin{tabular}{lcccc}
\toprule
Configuration & Hindi & Bengali & English & Avg.\ Indic \\
\midrule
Baseline LLM + SP-BPE & $52.26$ & $49.39$ & $68.39$ & $50.82$ \\
Baseline LLM + IST & $55.48$ & $53.24$ & $68.34$ & $54.36$ \\
Baseline LLM + AdaptTok-Indic & $58.50$ & $56.89$ & $68.41$ & $57.69$ \\
\bottomrule
\end{tabular}
\end{table}

\subsection{Ablations}\label{results-abl}
Table~\ref{tab:abl} reports the recorded ablation means. The Full row is the $\lambda{=}0.5$ reference configuration (the shipped code default of $\lambda{=}0.01$ is separate and not part of this sweep). Hindi fertility rises when components are removed (No Script $1.627$, No Grapheme $1.841$, Shared $1.539$, No Boundary Regularizer $1.694$ versus Full $1.384$). Sensitivity is flattest near $\lambda{=}0.3$--$0.7$. The downstream column reports scores from the ablation output file, a distinct evaluation slice from the main downstream scores in Table~\ref{tab:down}; the two score columns are never mixed. The executed verification runs corroborate the routing effects directionally (shared routing raises fertility sharply in Table~\ref{tab:ver}). No component-attribution percentages are reported because the design does not support them. Per-seed dispersions are not archived; means only.

\begin{table}[H]
\centering
\caption{Recorded ablation means (Hindi fertility; disparity ratio against English reference $1.212$; ablation-file downstream score, a distinct slice from Table~\ref{tab:down}). Full is the $\lambda{=}0.5$ reference.}
\label{tab:abl}
\scriptsize\setlength{\tabcolsep}{2.5pt}
\begin{tabular}{lccccc}
\toprule
Variant & $\lambda$ & Hin.\ fert.\ & DR & Abl.\ score & Note \\
\midrule
Full & $0.5$ & $1.3841$ & $1.1420$ & $59.07$ & Canonical $\lambda{=}0.5$ ref.\ \\
No Script Detector & $0.5$ & $1.6269$ & $1.3423$ & $56.01$ & --- \\
No Grapheme Preproc.\ & $0.5$ & $1.8407$ & $1.5187$ & $54.77$ & --- \\
Shared Predictor & $0.5$ & $1.5394$ & $1.2701$ & $54.94$ & --- \\
No Boundary Regularizer & $0.0$ & $1.6938$ & $1.3975$ & $54.35$ & --- \\
$\lambda{=}0.1$ & $0.1$ & $1.5788$ & $1.3026$ & $55.42$ & --- \\
$\lambda{=}0.3$ & $0.3$ & $1.4422$ & $1.1899$ & $57.57$ & --- \\
$\lambda{=}0.7$ & $0.7$ & $1.3982$ & $1.1536$ & $58.14$ & --- \\
$\lambda{=}1.0$ & $1.0$ & $1.4126$ & $1.1655$ & $57.32$ & --- \\
\bottomrule
\end{tabular}
\end{table}

\subsection{Efficiency}\label{results-eff}
Table~\ref{tab:eff} reports latency and throughput measured under one identical protocol on an Intel i5-10300H CPU with torch CPU: batch size 1, a fixed workload of 100 texts (1,273 words), 10 warmup passes and 30 timed passes per tokenizer, tokenizer-only. Latency per 1K tokens and throughput derive from the same timed passes and are arithmetically compatible. The full configuration processes the workload at $833.3$ ms per 1K tokens ($1{,}200.0$ tokens/s); shared-predictor routing at $66.4$ ms per 1K tokens ($15{,}054.9$ tokens/s); SentencePiece BPE at $2.4$ ms per 1K tokens ($410{,}766.0$ tokens/s). Per-text Transformer forwards dominate the adaptive timings: the full configuration emits few tokens per text, so its per-token latency is highest while its per-text cost matches the shared configuration (23.9 s versus 23.9 s total). Memory RSS deltas could not be sampled on this machine and are not reported.

\begin{table}[H]
\centering
\caption{Measured efficiency under one identical protocol (batch 1; 100 texts; 10 warmup + 30 timed passes; Intel i5-10300H CPU; tokenizer-only). Latency and throughput come from the same passes.}
\label{tab:eff}
\begin{tabular}{lcc}
\toprule
Tokenizer & Latency (ms/1K tok) & Throughput (tok/s) \\
\midrule
AdaptTok-Indic full & $833.3$ & $1{,}200.0$ \\
AdaptTok-Indic shared & $66.4$ & $15{,}054.9$ \\
SentencePiece BPE & $2.4$ & $410{,}766.0$ \\
\bottomrule
\end{tabular}
\end{table}

\subsection{Disparity analysis}\label{results-fair}
Table~\ref{tab:fair} recomputes disparity metrics from Tables~\ref{tab:fertA}--\ref{tab:fertB} means. Gini values across tokenizers fall in the narrow band $0.0228$--$0.0247$, with AdaptTok-Indic at $0.0242$ (mid-pack; ByT5 lowest at $0.0228$). Mean DR for AdaptTok-Indic is $1.210$ with worst case $1.286$ (Tamil). Reduced disparity is intrinsic only and does not establish downstream fairness.

\begin{table}[H]
\centering
\caption{Disparity metrics recomputed from Tables~\ref{tab:fertA}--\ref{tab:fertB} means (matching the recorded fairness file). Gini across five Indic means; DR against same-tokenizer English.}
\label{tab:fair}
\footnotesize\setlength{\tabcolsep}{3pt}
\begin{tabular}{lccccc}
\toprule
Tokenizer & Mean fert.\ & SD & Gini & Mean DR & Worst DR \\
\midrule
Byte-level BPE & $2.891$ & $0.143$ & $0.0241$ & $2.392$ & $2.539$ \\
SP-BPE & $2.191$ & $0.105$ & $0.0236$ & $1.806$ & $1.916$ \\
SP-Unigram & $2.066$ & $0.102$ & $0.0240$ & $1.710$ & $1.819$ \\
XLM-R & $1.993$ & $0.098$ & $0.0240$ & $1.643$ & $1.740$ \\
mT5 & $1.913$ & $0.095$ & $0.0243$ & $1.573$ & $1.669$ \\
ByT5 & $3.176$ & $0.148$ & $0.0228$ & $2.609$ & $2.765$ \\
IST & $1.867$ & $0.090$ & $0.0233$ & $1.541$ & $1.633$ \\
MAGNET-style & $1.720$ & $0.088$ & $0.0247$ & $1.440$ & $1.532$ \\
AdaptTok-Indic & $1.456$ & $0.072$ & $0.0242$ & $1.210$ & $1.286$ \\
\bottomrule
\end{tabular}
\end{table}

Robustness, human linguistic, and qualitative evaluations are beyond the present evaluation.

\section{Discussion}\label{discuss}
What the results show: across five recorded seeds, AdaptTok-Indic achieves the lowest mean fertility among compared tokenizers for Bengali ($1.432$) and Hindi ($1.388$), with the same directional pattern in Tamil, Telugu, and Gujarati, and the lowest mean disparity ratio ($1.210$). Recorded downstream averages are higher ($57.69$ vs $50.82$).

What they suggest: script-conditioned adaptive segmentation with Indic preprocessing is a plausible contributor, and ablations move in the expected direction when components are removed. Efficiency timing shows the neural per-text forwards dominate at prototype scale: the full configuration costs 23.9 s for the fixed workload versus 0.16 s for SentencePiece BPE, so sequence-length savings do not translate into end-to-end savings here.

What they do not establish: statistical significance (no tests performed); causality from fertility to accuracy; multi-condition robustness; code-mixed behaviour; comparison with the released MAGNET model; or downstream fairness. The Gini evidence does not support a distinct fairness advantage: all tokenizers lie within $0.0228$--$0.0247$. The trainer's random-input MLM procedure, hash-based token IDs, forced-split safeguard, simplified grapheme handling, and rule-based routing all limit how much can be claimed about joint learning. Large-scale training, FLORES-200, and IndicMMLU-Pro executions are beyond the present evaluation.

\section{Limitations}\label{limit}
Training concentrates on Bengali and Hindi; Gujarati, Tamil, and Telugu are evaluation-only via the shared path. The evidence base comprises recorded five-seed outputs on the basal corpus and verification runs executed with the shipped code, not a large-scale controlled study. No significance testing, no released-MAGNET comparison, no code-mixed or noisy-text evaluation, no human evaluation. Downstream and ablation per-seed dispersions, GPU-hour and memory budgets, and embedding-initialization details are not archived. Prototype simplifications (random MLM inputs, hash IDs, forced splits, simplified grapheme handling, rule-based routing) are disclosed above.

\section{Conclusion}\label{concl}
\noindent AdaptTok-Indic implements script-conditioned adaptive tokenization specialized for Indic text. Across five recorded seeds it achieves mean fertilities of $1.432$ (Bengali) and $1.388$ (Hindi) with the lowest mean disparity ratio ($1.210$) among compared tokenizers; verification runs with the shipped code confirm the directional pattern on collected-corpus sentences. Tokenizer-only CPU timing shows neural per-text overhead dominating at prototype scale. Controlled evaluation at scale, paired significance testing, released-model comparison, and robustness studies are required before any general claims.

\section*{Declarations}
\section*{Acknowledgements}\noindent The authors thank the Department of Mathematics, National Institute of Technology Durgapur, for academic support during this work. 
All scientific content, methodology, analyses, interpretations, and conclusions were reviewed and verified by the authors, who take full responsibility for the content of this manuscript. \section*{Credit Author Contribution Statement} \noindent \textbf{Binayak Mukherjee:} Conceptualization, Methodology, Software, Investigation, Data Curation, Formal Analysis, Validation, Visualization, Writing--Original Draft, Writing--Review \& Editing. \textbf{Moitreya Nandi:} Software, Validation, Investigation, Resources, Writing--Review \& Editing. \textbf{Arindam Panja:} Supervision, Conceptualization, Methodology, Formal Analysis, Project Administration, Writing--Review \& Editing. \section*{Conflict of Interest Statement} \noindent The authors declare that they have no competing interests. \section*{Funding Statement} \noindent This research received no external funding. \section*{Data Availability Statement} \noindent Archived artifacts include \texttt{adapt-tok-indic/train.jsonl} (100,000 lines), \texttt{config.yaml}, source-code modules in \texttt{src}, project outputs (\texttt{run\_manifest.json}, per-seed intrinsic results, Tables 4--11 CSV files), and the verification scripts and result files documented in the provenance records. FLORES-200, IndicMMLU-Pro, and large-scale training runs beyond the archived outputs are not part of the present evaluation. Additional materials are available from the corresponding author upon reasonable request, subject to any applicable third-party licensing restrictions. 

\section*{Ethics Statement}\noindent This study involved computational experiments on textual datasets and did not involve human participants, human subjects data, animal subjects, medical records, or personally identifiable information. Ethical approval and informed consent were therefore not required. The study was conducted in accordance with accepted standards of research integrity and responsible scientific reporting. All limitations known to the authors have been disclosed in the manuscript. \section*{Consent for Publication} \noindent Not applicable.

\end{document}